\documentclass{article} 
\usepackage[preprint]{neurips_2024}


\usepackage[utf8]{inputenc} 
\usepackage[T1]{fontenc}    
\usepackage{url}            
\usepackage{booktabs}       
\usepackage{amsfonts}       
\usepackage{nicefrac}       
\usepackage{microtype}      
\usepackage{xcolor}
\definecolor{mydarkred}{rgb}{0.6,0,0}
\definecolor{mydarkgreen}{rgb}{0,0.6,0}
\usepackage[colorlinks,
linkcolor=mydarkred,
citecolor=mydarkgreen]{hyperref}
\usepackage{graphicx}
\usepackage{natbib}
\usepackage{amsmath}
\usepackage{subcaption}
\usepackage{multirow}
\usepackage{wrapfig}
\usepackage{hhline}

\newtheorem{theorem}{\bf{Theorem}}
\newtheorem{definition}{\bf{Definition}}

\newtheorem{remark}{\bf{Remark}}

\newtheorem{assumption}{\bf{Assumption}}
\newtheorem{proof}{\bf{Proof}}

\title{`No' Matters: Out-of-Distribution Detection in Multimodality Long Dialogue}

\author{
Rena Gao\thanks{Equal contribution.}\\
The University of Melbourne\\
\texttt{wegao@student.unimelb.edu.au} \\
\And 
Xuetong Wu$^{*}$\\ 
The University of Melbourne\\
\texttt{xuetongw1@student.unimelb.edu.au}\\
\And
Siwen Luo\\ 
The University of Western Australia\\
\texttt{siwen.luo@uwa.edu.au}\\ 
\And
Caren Han\\ 
The University of Melbourne\\
\texttt{caren.han@unimelb.edu.au}\\
\And
Feng Liu \\
The University of Melbourne\\
\texttt{feng.liu1@unimelb.edu.au}\\
}

%


\begin{document}

\maketitle

\begin{abstract}
Out-of-distribution (OOD) detection in multimodal contexts is essential for identifying deviations in combined inputs from different modalities, particularly in applications like open-domain dialogue systems or real-life dialogue interactions. This paper aims to improve the user experience that involves multi-round long dialogues by efficiently detecting OOD dialogues and images. We introduce a novel scoring framework named \textbf{D}ialogue \textbf{I}mage \textbf{A}ligning and \textbf{E}nhancing \textbf{F}ramework (DIAEF) that integrates the visual language models with the novel proposed scores that detect OOD in two key scenarios (1) mismatches between the dialogue and image input pair and (2) input pairs with previously unseen labels. Our experimental results, derived from various benchmarks, demonstrate that integrating image and multi-round dialogue OOD detection is more effective with previously unseen labels than using either modality independently. In the presence of mismatched pairs, our proposed score effectively identifies these mismatches and demonstrates strong robustness in long dialogues. This approach enhances domain-aware, adaptive conversational agents and establishes baselines for future studies.\footnote{Code can be found in \url{https://anonymous.4open.science/r/multimodal_ood-E443/README.md}.} 
\end{abstract}

\section{Introduction}
In the regime of multimodal learning contexts, Out-Of-Distribution (OOD) detection involves identifying whether some unknown inputs from different modalities (e.g., text and images) deviate significantly from the patterns in the previously seen data. Specifically, an OOD instance under the multimodal setting is defined as one that does not conform to a certain distribution of interest, either by deviating in one modality or by showing the discrepancy across different modalities \citep{arora2021types,chen2021atom,feng2022mmdialog,hsu2020generalized}. This is crucial in applications such as dialogue-image systems where the synergy between spoken or written language and visual elements is expected to adhere to certain semantic and contextual norms when identifying the In-Distribution (ID) pairs where they come from some known distribution.

Particularly in the dialogue system with inputs from different modalities, efficiently handling OOD queries/images can significantly improve user satisfaction and trust as the response quality hinges tightly on the understanding of the semantics from different modality inputs. Recognizing and managing OOD queries — those that deviate from expected dialogue or image patterns and contents — is essential for maintaining these systems' reliability and user experience, especially in dynamically changing dialogue systems with real-life interaction from users with much noise \citep{gao2024interaction, gao2024cnima}. Taking three motivating examples as shown in Figure~\ref{fig:illustration}, we are given several dialogue-image pairs for OOD detection where our ID label is `cat'. We will then consider two typical OOD cases in dialogue systems where either: 1) the dialogue label and image labels are not matched, or 2) even if the dialogue and image match, their labels might not be previously seen in the given data.

To effectively detect OOD samples in such a  novel multi-modalities multi-round long dialogue scenario, we introduce \textbf{D}ialogue \textbf{I}mage \textbf{A}ligning and \textbf{E}nhancing \textbf{F}ramework (DIAEF), a framework that incorporates a novel OOD score for taking the first attempt on dialogue-image OOD detection for long dialogue systems. We propose a new score design across these two modalities, enabling more targeted controls for misalignment detection and performance enhancement. 
\begin{wrapfigure}[17]{r}{0.45\textwidth}
    \centering \includegraphics[width=0.38\textwidth]{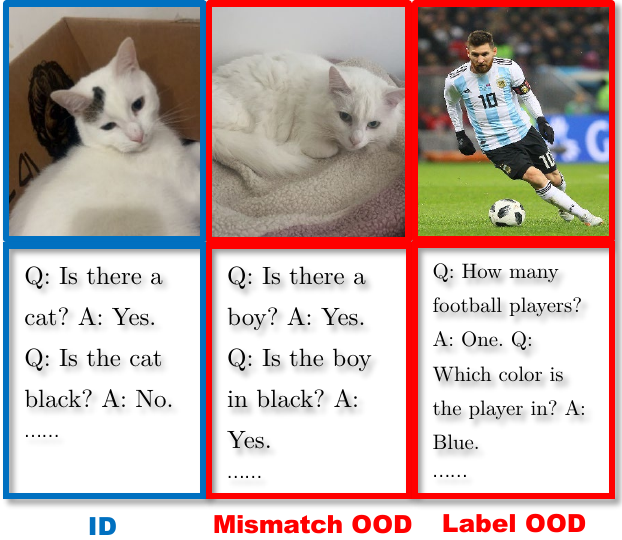} 
    \caption{Motivating examples for ID, mismatched OOD and label OOD pair where the ID label is `cat' and OOD label is `sport'.} \label{fig:illustration}
\end{wrapfigure} 
Such a framework could effectively boost anomaly detection and give better response strategies in long dialogue systems with interactive aims. This comprehensive score framework not only advances the field of multimodal conversations but also sets a new standard for domain-aware, adaptive long dialogue agent building for the future. To show the effectiveness of the proposed framework, we constructed a dataset consisting of over 120K dialogues in the multi-round application Question-answering systems and open-domain real interactive dialogues \citep{seo2017visual, lee2021constructing}. Leveraging these dialogue datasets, we apply our proposed framework and demonstrate the effectiveness of the novel score design through various experiments. These experiments establish fundamental benchmarks and pave the way for future explorations in such a novel dialogue setting. Furthermore, we integrate the crucial aspect of OOD detection, emphasizing its significance for enhancing the robustness and applicability of multimodal dialogue systems \citep{dai2023exploring, dosyn2022domain, wu2024generalization}. To summarize, \textbf{our contributions} are listed as follows:

\textbullet{} $ $ We take the first attempt for OOD detection with the dialogues and propose a novel framework that enhances the OOD detection in cross-modal contexts, particularly focusing on scenarios where dialogue-image pairs either do not match with the semantics or even match, but their semantic labels are outside the known set, which matters in long-dialogue context for users. 

\textbullet{} $ $ Our framework incorporates a novel scoring method by combining both dialogues and images to enhance the OOD detection while recognizing the mismatch pairs with the dialogue-image similarity. 

\textbullet{} $ $  We demonstrate the practical application of our methods with the real-world multi-round long dialogue dataset, showcasing improvements in user experience and system reliability upon response. Further, our work establishes foundational benchmarks and methodologies that can serve as baseline standards for future research in the field of cross-modal detection on interactive dialogue systems. 

\section{Related Work}
\textbf{OOD Detection in Dialogue Systems.} 
Dialogue systems have become fundamental in applications ranging from virtual assistants and customer service bots to educational platforms with continuous multi-rounds \citep{feng2022mmdialog,kottur2019clevr,seo2017visual,yu2019you,gao2024listenership}. The evolution of dialogue systems has seen a progression from rule-based and template-based approaches to statistical and machine learning methods \citep{zheng2020out,lang2023out,deka2023distributed,mei2024multi,arora2021types}. Modern systems, particularly those based on deep learning models like BERT and GPT, have set new performance benchmarks \citep{yuan2024revisiting,hendrycks2020pretrained,yang2022glue,ye2023robust}. However, the complexity of these systems introduces challenges in understanding context and handling ambiguous semantic queries, necessitating more sophisticated approaches to maintain dialogue coherence and accuracy in interactive dialogue contexts, especially for real-life cases. OOD detection is a critical aspect of dialogue systems, ensuring their robustness and reliability in generating responses to user queries \citep{niu2021introspective,chen2022rethinking}.  When dialogue systems encounter inputs that deviate from the training data distribution in long multi-round data, they risk generating incorrect or nonsensical answers, leading to user frustration and decreased trust. Effective OOD detection helps identify such anomalous queries, allowing the system to gracefully handle or reject them, thereby maintaining the quality and consistency of responses  \citep{li2021encoder}, including softmax probability thresholding \citep{liu2023gen,dhuliawala2023variational}, auxiliary models \citep{wang2024learning,zheng2024out,rame2023model}, generative models \citep{cai2023out,ktena2024generative,graham2023denoising}, and self-supervised learning \citep{azizi2023robust,wallin2024improving,liu2021self}. The integration of effective OOD detection mechanisms is crucial for the continued advancement and trustworthiness of QA dialogue systems \citep{salvador2017learning,feng2022mmdialog}. 

\textbf{Dialogue-based Multimodality OOD Detection.}
Due to the complexity of dialogue in multi-turns and information complexity embedded in the connection of preceding turns within the long dialogue, successfully detecting whether the information from the dialogue and images are within the same domain stands as a technical challenge in OOD detection \citep{fort2021exploring,basart2022scaling}. Previous works attempted to evaluate the generated pseudo-OOD samples' impact on the OOD section in dialogue settings \citep{marciniak2020wordnet}, which improved OOD detection performance after introducing the generated dialogues when utilizing unlabeled data, making the model practical and effective for real-world applications \citep{zheng2020out}. Later studies used the information bottleneck principle to extract robust representations by filtering out irrelevant information for multi-turn dialogue contexts \citep{lang2023out}. Furthermore, the crucial aspect of OOD detection in multimodal long dialogue is still under investigation, emphasizing the significance of multimodal conversational user experience in question-answering systems.  

\textbf{Multi-label OOD Detection.} 
While numerous studies have improved approaches for multi-class OOD detection tasks, investigating multi-label OOD detection tasks has been notably limited. A recent advancement is the introduction of Spectral Normalized Joint Energy (SNoJoE) \citep{mei2024multi}, a method that consolidates label-specific information across multiple labels using an energy-based function. Later on, the sparse label co-occurrence scoring (SLCS) leverages these properties by filtering low-confidence logits to enhance label sparsity and weighting preserved logits by label co-occurrence information \citep{wang2022multi}. Considering the vision-language information as input in models like CLIP \citep{radford2021learning}, traditional vision-language prompt learning methods face limitations due to ID-irrelevant information in text embeddings. To address this, the Local regularized Context Optimization (LoCoOp) approach enhances OOD detection by leveraging CLIP’s local features in one-shot settings \citep{miyai2024locoop}. However, previous approaches majorly implied the limitation only in computer vision tasks without focus on dialogue or Natural Language Processing tasks\citep{wei2015hcp,zhang2023theoretical,wang2021can,wang2022multi}. 

\section{Problem Formulation} 
To formally define the cross-modal OOD problem, we focus on the detection with dialogue and image pairs within a multi-class classification framework. Specifically, we have a batch of $N$ pairs of images and dialogues, along with their labels, denoted by $\left\{\left(i_n, t_n\right), \mathbf{y}_n\right\}_{n=1}^N$ where $i_n \in  \mathcal{I}$ and $t_n \in \mathcal{T}$ denote the input image and dialogues and $\mathcal{I}$ and $\mathcal{T}$ are the image and dialogue spaces, respectively. Here, the instance pair may be associated with multiple labels $\mathbf{y}_n$ with $\mathbf{y}_n = \{y_{n,1}, y_{n,2}, \cdots, y_{n,K} \} \in [0,1]^{K}$ where $y_{n,k} = 1$ if the dialogue-image pair is associated with $k$-th label and $K$ denotes the total number of in-domain categories. Our proposed score function enhances the ability to distinguish between ID and OOD data cross-joint detection for image and dialogue, making it applicable in multimodality scenarios.  Based on this setup, the goal of the OOD detection is to define a decision function $G$ such that:
\begin{align}
G(i, t, \mathbf{y}) = 
\begin{cases} 
0 & \text{if } (i, t, \mathbf{y}) \sim \mathcal{D}_{\text{out}}, \\
1 & \text{if } (i, t, \mathbf{y}) \sim \mathcal{D}_{\text{in}}.
\end{cases}    
\end{align}
\begin{remark}
Different from unimodal OOD detection \citep{lee2018simple,basart2022scaling,hendrycks2016baseline,du2022unknown,wu2023bayesian}, in the cross-modal detection scenarios, we need to additionally consider whether the image and dialogue come from the same distribution, i.e., whether the image and dialogue are semantically matched in the interaction context. In particular, we will consider several scenarios for detecting OOD samples: 1). the image and dialogue do not match (e.g., in terms of content or description), and 2). the in-domain sample does not contain any out-of-domain labels, meaning previously unseen labels appear, or 3). both cases occur simultaneously. 
\end{remark}

To determine $G$ in practice, we may need to consider the relationship between dialogue and images additionally. To this end, let $M: \mathcal{I} \cup \mathcal{T} \rightarrow \mathbb{R}^{d}$ be a vision-language model that could encode the image $i_n$ with the image embedding $x_{i,n} \in \mathbb{R}^d$, and the dialogues with the text embedding $x_{t,n} \in \mathbb{R}^d$ in the same latent space as in the image. To classify the relevance of an image to a dialogue according to the label $\mathbf{y}_n$, we first use a scoring function $s: \mathbb{R}^d \times \mathbb{R}^d \rightarrow \mathbb{R}$, which evaluates the similarity or relevance between the image and text embeddings from $M$. We then further compare these two embeddings with the label $\mathbf{y}_n$ with the dialogue score function $s_T: \mathbb{R}^d \times [0,1]^K \rightarrow \mathbb{R}$ and image score function $s_I: \mathbb{R}^d \times [0,1]^K \rightarrow \mathbb{R}$. For simplicity, we use $s_I$ (or $s_T$) interchangeably with $s_I(x, \mathbf{y})$ throughout the paper. Finally, we could conduct a fusion on the three scores $g(s, s_T, s_I)$ for some fusion function $g$ and check if the numeric value exceeds $\lambda$ to determine whether it is in-domain or out-of-domain. Given the above definition, given a dialogue-image data pair $(i, t)$, we will examine whether it is ID or OOD per dialogue-image pair in the given label set $\mathcal{Y}$ with the following criterion. 
\begin{definition}[Cross-Modal OOD Detection]
We use the following detection criterion for out-of-domain samples.
\begin{itemize}
    \item \textbf{In-domain:} given both embeddings $x_i$ from the images and $x_t$ from the dialogue, and a certain label $y$. We say the image is in-domain with the dialogue if $g(s(x_i, x_t), s_t(x_t, \mathbf{y}), s_i(x_i,\mathbf{y})) \geq \lambda$. 
    \item \textbf{Out-of-domain:}: given both embeddings $x_i$ from the images and $x_t$ from the dialogue, we say the image is out-of-domain with the dialogue if $g(s(x_i, x_t), s_t(x_t, \mathbf{y}), s_i(x_i, \mathbf{y}))$ < $\lambda$. 
\end{itemize}
for some fusion function $g$ and some threshold $\lambda$.
\end{definition}

\section{Dialogue {I}mage {A}ligning and {E}nhancing {F}ramework} \label{sec:diaef}

To intuitively demonstrate our framework, we draw the overall workflow in Figure~\ref{fig:workflow}. The workflow consists of three parts: in the first stage, we will employ a vision language model, such as CLIP \citep{radford2021learning} and BLIP \citep{li2022blip}, to derive meaningful descriptors or feature embeddings from images and dialogues, respectively. Note that the model we used here would map the image and dialogue into the same latent space so that the similarity between the two can be easily calculated. These processes yield embeddings $x_{I}$ for images and $x_{T}$ for dialogues. Then, utilizing these embeddings, we apply a scoring function $s(x_{I}, x_{T})$ to assess the relevance between an image and a dialogue. The outcome of this function helps us determine whether the dialogue-image pair falls within the categories, indicating a high relevance in semantics, or the out-of-distribution categories with mismatches, suggesting low or no relevance. 

In addition to this score, we will further train two label extractors to compare the whole pair with the label set to determine if the pair is in-distribution or out-of-distribution using $s_I(x_I, \mathbf{y})$ that evaluates the similarity between the image and the label and $s_T(x_T, \mathbf{y})$ that evaluates the similarity between the text and the label. We will use conventional methods to combine these two scores and determine whether the pair is ID or OOD based on the threshold $\lambda$. 
\begin{figure}[!t]
    \centering
    \includegraphics[width = 1\textwidth]{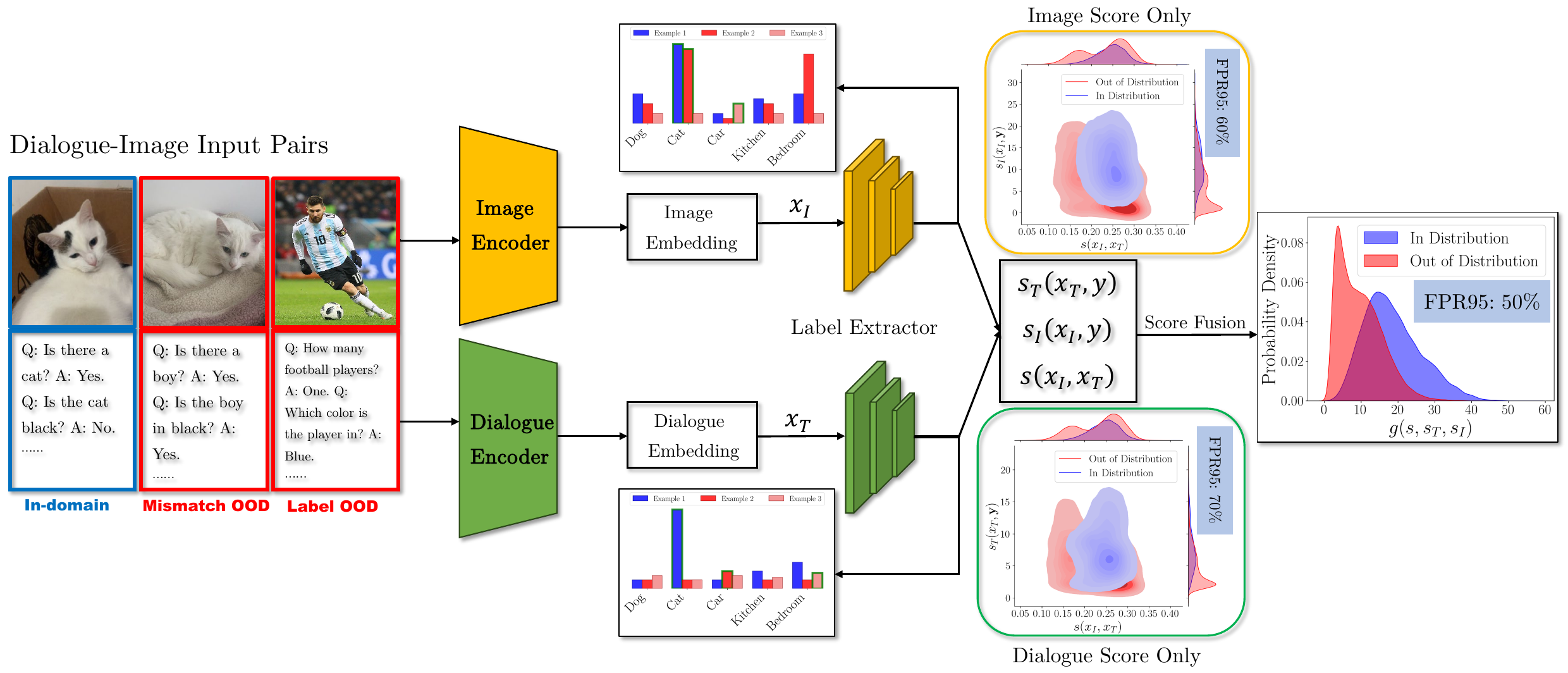}
    \caption{The workflow for three motivating examples for cross-modal OOD detection, including ID pair, mismatched OOD pair, and label OOD pair. The workflow consists of three main parts: the dialogue and image will be firstly processed and passed into a visual language model to get the image and dialogue embeddings; then two label extractors will be trained on both the image and dialogue embeddings for predictions and score calculations; finally the score function $s$, $s_T$ and $s_I$ are aggregated to determine the threshold $\lambda$ at recall rate of 95\%. The FPR95\% is reported to demonstrate that combining images and dialogue outperforms using images or dialogue alone.} \label{fig:workflow}
\end{figure}

This paper aims to enhance the detection of OOD samples by combining dialogues and images and identifying the misalignments between them. To this end, we naturally propose the DIAEF score function in general:
\begin{align}
 g(x_T, x_I, \mathbf{y}; s, s_T, s_I) = s(x_T, x_I)^{\gamma} (\alpha s_I(x_I, \mathbf{y}) + (1-\alpha) s_T(x_T, \mathbf{y})),   
\end{align}
\begin{wraptable}{r}{7cm}
\centering
\caption{OOD Scores for $s_I$/$s_T$.} \label{tab:scores} 
\scriptsize
\begin{tabular}{c|c}
  \hline
  Method & Score \\
  \hline 
  Probability &  $P_y(x)$ \\
  \hline
  MSP \citep{hendrycks2016baseline} & $\max_{y \in \mathcal{Y}} \frac{f_y(x)}{\sum_y f_y(x)}$ \\
  \hline
  Logits \citep{hendrycks2019benchmark}& $f_y(x)$ \\
  \hline
  Energy \citep{wang2021can} & $\log(1 + e^{f_y(x)})$\\
  \hline 
  ODIN \citep{liang2017enhancing} &  $f_y(x+\epsilon \Delta)/T$ \\
  \hline
  Mahalanobis \citep{lee2018simple} & $(x - \mu_y)^T \Sigma_y^{-1} (x - \mu_y)$\\
  \hline
\end{tabular}
\end{wraptable}
where the first part $s(x_T, x_I)^{\gamma}$, which we call the alignment term, controls the similarity between the image and the dialogue. If the image and dialogue are highly similar, this term will be large and vice versa. This allows us to identify the misalignment between images and dialogues in a long dialogue system. The second part $(\alpha s_I + (1-\alpha) s_T)$, namely the enhancing term, enhances the detection of OOD samples by linearly combining the dialogue and image scores, where the weighting hyperparameter $\alpha$ controls the relative importance of the image: if $\alpha$ is selected to be large, we rely more on images for OOD detection; conversely for a small $\alpha$, we rely more on the dialogue. The purpose of using a multiplicative combination of the alignment and enhancing terms is: (1) identifying mismatched OOD pairs where either the image or dialogue might have high relevance to the label, making the enhancing term potentially large (depending on $s_I$ or $s_T$). To identify these pairs as OOD samples, we naturally multiply the enhancing term by $s(x_T, x_I)$; (2) identifying matched pairs with OOD labels where $s(x_T, x_I)$ may be large, but the enhancement term is likely to be small since the image and dialogue have low relevance to the label. To show this mathematically, we give a theoretical justification for the proposed score in Appendix~\ref{apd:theoretical_analysis}.

The choice of the functions $s(x_I, x_T)$ depends on the selection of the trained visual language model. For example, in CLIP, contrastive loss is used to measure the similarity between images and text (dialogue) based on cosine similarity \citep{radford2021learning}. Similarly, BLIP employs image-text matching loss and leverages cosine similarity to align the representations of images and text \citep{li2022blip}. With those two models, selecting cosine similarity as an appropriate score for $s(x_I, x_T)$ is natural. Regarding $s_I$ and $s_T$, which measure the scores between embeddings and labels, various potential choices and aggregation methods are available. For example, one direct way is to use the probability of the model output $P_y(x)$ as the score for the category $y$ with the input $x$, and we could further aggregate the probability over all categories using sum or max methods to derive our final DIEAF score. More complicated scores would involve some probability transformation, such as the logits $f_y(x)$ used in \citep{hendrycks2019benchmark} or the normalized version called MSP as used in \citep{hendrycks2016baseline}. Some other effective scores would involve the pre-trained models, such as the ODIN method proposed in \citep{liang2017enhancing} modifies the input by adding a gradient-based perturbation, or the method proposed in \citep{lee2018simple} computes the Mahalanobis distance between the embeddings from the pre-trained model and the class conditional distributions in the feature space. Table~\ref{tab:scores} shows the list of possible scores that could fit in our framework.

\section{Experiments}

In this section, we evaluate DIAEF and other baselines using several datasets.

\subsection{Experimental Setup}

\textbf{Datasets.} In this section, we utilize the Visdial dataset \citep{das2017visual} and Real MMD dataset \citep{lee2021constructing} for OOD detection in long dialogue systems. The Visdial dataset comprises over 120K images sourced from the COCO image dataset \citep{lin2014microsoft}, coupled with collected multi-round dialogues in a one-to-one mapping format between modalities. We constructed a testing multi-round question-answering dataset with full semantic context to evaluate our OOD detection methods, including all dialogue-image pairs and an additional 10K mismatched pairs. Each entry in this dataset contains an image, a full conversation, and a set of labels with 80 specific categories. The dataset is further organized into 12 higher-level supercategories: \textit{animal}, \textit{person}, \textit{kitchen}, \textit{food}, \textit{sports}, \textit{electronic}, \textit{accessory}, \textit{furniture}, \textit{indoor}, \textit{appliance}, \textit{vehicle}, and \textit{outdoor}. Another related dataset, called the Real MMD dataset, contains images sourced from COCO \citep{lin2014microsoft} and texts from different sources such as DailyDialog \citep{li2017dailydialog} and Persona-Chat \citep{miller2017parlai}, meaning they may not be perfectly matched but instead have a certain degree of similarity. The dataset statistics are presented in Table~\ref{tab:stats_qa_mmd} and Table~\ref{tab:stats_real_mmd} in Appendix~\ref{apdsec:experimental_details}.

\begin{wraptable}{r}{3cm}
\centering
\caption{Top 5 Labels}\label{tab:selection-score}
\begin{tabular}{c|c}
\hline 
  Label & $S(c)$ \\
  \hline
  Animal & 5.12 \\
  Person & 5.01 \\
  Sports & 4.89 \\
  Vehicle & 4.80 \\
  Outdoor & 4.79 \\
  \hline 
\end{tabular}
\end{wraptable}

\textbf{OOD Label Selection.} In our study, we propose a label selection score function for selecting OOD labels that effectively combines semantic distance \citep{huang2008similarity,kadhim2014feature,li2013distance,rahutomo2012semantic,lahitani2016cosine} and ontological hierarchy via the WordNet path calculation \citep{aminu2021enhanced,dosyn2022domain,fellbaum2010wordnet,marciniak2020wordnet,martin1995using}. The score function integrates multiple criteria to enhance the robustness and accuracy of OOD detection. Semantic distance is quantified using cosine similarity between vector representations of candidate labels and the remaining labels in the label set. We compute the maximum cosine similarity to any ID label for each candidate OOD label and select those with values below a predefined threshold, ensuring semantic distinctiveness. Additionally, we leverage ontological hierarchies, such as WordNet, to measure the path length between candidate labels and ID labels. Candidates with a minimum path length exceeding a specified threshold are selected, ensuring they are not closely related in the hierarchy. This dual-criteria approach ensures that selected OOD labels are both semantically distant and ontologically distinct from ID labels, enhancing the efficacy of the OOD detection system. By integrating these methods, our score function effectively mimics real scenarios where the OOD labels generally differ from the ID labels\footnote{For tuning label selections, we list the table below using the cosine similarity (Descending order): [sports, outdoor, animal, fashion, electronics, person, bedroom, vehicle, appliance, kitchen, food, furniture]. With wordnet only: [person, animal, vehicle, furniture, appliance, kitchen, food, bedroom, fashion, electricity, outdoor, sports]. Using only cosine similarity, labels skewed towards broad, abstract categories like "sports" and "outdoor", reflecting a focus on general semantic similarities (complex context where more background information is needed). Comparably, using only WordNet similarity emphasized specific, taxonomically grounded categories like "person" and "animal", highlighting hierarchical relationships (suitable when labels are short descriptors). Adaptive weighting or context-specific tuning could be explored for future refinements where weights are dynamically adjusted regarding dataset characteristics or task requirements.}. Therefore, we define the selection score as: 
\begin{align}
S(c) = w_1 \sum_{y \in \mathcal{Y}\setminus c}\left(1 - S_{\textsc{cos}}(M(c), M(y))\right) + w_2 \sum_{y \in \mathcal{Y}\setminus c} (1 - S_{\textsc{path}}(c, y)),
\end{align}
where
\begin{align}
  S_{\cos}(\mathbf{a}, \mathbf{b}) = \frac{\mathbf{a} \cdot \mathbf{b}}{\|\mathbf{a}\| \|\mathbf{b}\|},~~~~
  S_{\textsc{path}}(y_1, y_2) = \frac{1}{1 + \ell_d(y_1, y_2)}.
\end{align}
Here, $M(c)$ and $M(y)$ are the vector representations of the candidate OOD labels $c$ and the ID label $y$ with the encoder $M$, respectively, $w_1$ and $w_2$ are the weights assigned to each criterion, and $ \mathcal{Y}$ represents the total valid label set. The term $S_{\cos}$ measures the semantic distance, and  $S_{\textsc{path}}(y_1, y_2)$ measures the ontological distance between labels with the path distance $\ell_d(y_1, y_2)$ between the words $y_1$ and $y_2$ in the WordNet. We conduct the score selection on the Visdial dataset with $w_1 = w_2 = 0.5$, and the top five scores with the most distinguished labels are shown in Table~\ref{tab:selection-score}. To ensure that the selected OOD labels are both semantically distant and ontologically distinct from ID labels, we select candidates $c$ where the score $S(c)$ is the highest.

\begin{table}[!t]
\centering
\caption{The comparison of OOD detection performance with QA dataset under CLIP extraction and different scores. \textbf{Bold} numbers are superior results for each DIAEF score and aggregation method. Metrics reported in \% include FPR95 ($\downarrow$ indicates the lower the better), AUROC, and AUPR ($\uparrow$ indicates the higher the better).}\label{tab:main_result}
\begin{tabular}{l|c|c|c|c}
\hline
\multicolumn{5}{c}{\textbf{FPR95}$\downarrow$ / \textbf{AUROC}$\uparrow$ / \textbf{AUPR}$\uparrow$} \\
\hline
\multirow{2}{*}{OOD Scores} & \multirow{2}{*}{Aggregation} & \multicolumn{2}{c|}{Baseline w/ OOD Scores} & DIAEF \\
 &  & \multicolumn{1}{c}{Image} & \multicolumn{1}{c|}{Dialogue} & w/ OOD Scores\\
\hline
\multirow{2}{*}{MSP} & \multirow{2}{*}{Max} & \multirow{2}{*}{84.4/}  \multirow{2}{*}{64.8/}  \multirow{2}{*}{49.0} & \multirow{2}{*}{76.9/}  \multirow{2}{*}{66.5/}  \multirow{2}{*}{48.8} & \multirow{2}{*}{\textbf{73.4}/}  \multirow{2}{*}{\textbf{73.2}/}  \multirow{2}{*}{\textbf{53.5}} \\ 
 & & & & \\
\hline
\multirow{2}{*}{Prob} & Max & 60.0 / 75.6 / \textbf{57.9} & 67.9 / 73.5 / 56.1 & \textbf{55.3} / \textbf{78.8} / \textbf{57.9} \\
& Sum & \textbf{70.7} / 68.3 / 49.0 & 91.9 / 62.3 / 45.7 & 72.8 / \textbf{73.6} / \textbf{56.6} \\
\hline
\multirow{2}{*}{Logits } & Max & 60.0 / 75.6 / 57.9 & 67.9 / 73.5 / 56.1 & \textbf{57.2} / \textbf{82.6} / \textbf{72.7} \\
& Sum & \textbf{91.2} / \textbf{59.2} / \textbf{43.6} & 98.6 / 44.1 / 36.0 & 97.2 / 49.9 / 37.4 \\
\hline
\multirow{2}{*}{ODIN } & Max & \textbf{59.1} / 75.4 / 57.6 & 72.1 / 73.2 / 55.5 & 59.6 / \textbf{78.9} / \textbf{58.8} \\
& Sum & \textbf{71.2} / 68.0 / 48.8 & 91.9 / 61.6 / 45.2 & 73.0 / \textbf{73.2} / \textbf{56.0} \\
\hline
\multirow{2}{*}{Mahalanobis } & Max & \textbf{49.2} / 81.3 / 62.9 & 66.0 / 75.8 / 56.8 & 49.7 / \textbf{83.2} / \textbf{67.1}\\
& Sum & 88.5 / 75.5 / 57.5 & 78.6 / 68.6 / 50.0 & \textbf{75.0} / \textbf{76.2} / \textbf{60.2} \\
\hline
\multirow{2}{*}{JointEnergy } & Max & 60.0 / 75.6 / 57.9 & 67.9 / 73.5 / 56.1 & \textbf{57.6} / \textbf{82.5} / \textbf{72.6} \\
& Sum & 58.3 / 75.8 / 58.0 & 67.0 / 74.1 / 57.1 & \textbf{55.9} / \textbf{82.3} / \textbf{72.2} \\
\hline
\hline
\multirow{2}{*}{Average}  &  Max  &  62.1 / 74.7 / 57.2 & 69.8 / 72.7 / 54.9 & \textbf{58.8} / \textbf{79.9} / \textbf{63.8} \\
& Sum & 76.0 / 69.4 / 51.4 & 85.6 / 62.1 / 46.8 & \textbf{74.8} / \textbf{71.0} / \textbf{56.5} \\
\hline 
\end{tabular}
\vspace{-1em}
\end{table}

\textbf{Experiment Details.} Based on Table~\ref{tab:selection-score}, we select the label `\textit{animal}' as the OOD label to show the framework's effectiveness. We will have 95K ID pairs and 37K OOD pairs for QA dataset and 12.7K ID pairs and 12.2K OOD pairs for the Real MMD dataset. We will use the 8:2 train-test split, yielding 77K/54K and 10.2K/14.7K train/test pairs, respectively. For encoders for images and dialogues, we use CLIP ViT-B/32 \citep{radford2021learning} throughout the experiments, and we trained the label extractors with the ID training sample with a 5-layer fully connected network. More details are given in Appendix~\ref{apdsec:experimental_details}. Additionally, we use sum and max aggregation methods for the above methods. The sum aggregation method combines the scores across all considered classes or components, providing an overall score that reflects the cumulative effect. The max aggregation method selects the maximum score among all classes or components, highlighting the strongest single match. These aggregation methods allow us to assess the performance and robustness of each scoring function comprehensively. We use the cosine similarity for $s(x_I, x_T)$ for CLIP embeddings and set $\gamma = 1$ and $\alpha = 0.5$ as the hyperparameter default values. To ensure the consistency and reliability of our results, all experiments were executed on a system featuring a single NVIDIA RTX 2080 Super GPU. 

\textbf{Adopted OOD Scores.} Throughout the experiments, we used several general OOD scores to evaluate the effectiveness of the framework, which includes Probability (Prob), Maximum Softmax Probability (MSP) \citep{hendrycks2016baseline}, Logits \citep{hendrycks2019benchmark}, Joint Energy \citep{wang2021can}, ODIN \citep{liang2017enhancing} and Mahalanobis distance \citep{lee2018simple}. These baseline methods provide a diverse set of techniques for OOD detection, each leveraging different aspects of the model's output and feature embeddings. Then, we included two baselines with the DIEAF scores in our evaluation. The first baseline, with image only, utilizes the score function $s_I(x_I, \mathbf{y})$ to determine the score threshold for OOD. The second baseline, with dialogue only, employs a similar approach, using the score function $s_T(x_T, \mathbf{y})$ instead. All methods are evaluated with the metrics FPR95, AUROC and AUPR as previously mentioned in Section~\ref{sec:diaef}.

\textbf{Evaluation.} We include the following metrics in our evaluation for OOD detection: FPR95, AUROC and AUPR. FPR95 measures the rate at which false positives occur when the true positive rate is fixed at 95\%. This metric indicates how often the model incorrectly classifies a negative instance as positive when it correctly identifies 95\% of all positive instances; a lower FPR95 value signifies a better-performing model. AUROC evaluates the overall ability of a model to discriminate between positive and negative classes across all possible classification thresholds. It involves plotting the ROC curve with the true positive rate against the false positive rate at various threshold settings. A higher AUROC value denotes a better-performing model. AUPR, similar to AUROC, focuses on the precision-recall curve, which plots precision against recall. This metric is particularly useful in class imbalance scenarios. A better AUPR  indicates a better model's performance.

\begin{table}[!t]
\centering
\caption{The comparison of OOD detection performance with Real MMD dataset under CLIP extraction and different scores.} \label{result:real_mmd_clip}
\begin{tabular}{l|c|c|c|c}
\hline
\multicolumn{5}{c}{\textbf{FPR95}$\downarrow$ / \textbf{AUROC}$\uparrow$ / \textbf{AUPR}$\uparrow$} \\
\hline
\multirow{2}{*}{OOD Scores} & \multirow{2}{*}{Aggregation} & \multicolumn{2}{c|}{Baseline w/ OOD Scores} & DIAEF \\
 &  & \multicolumn{1}{c}{Image} & \multicolumn{1}{c|}{Dialogue} & w/ OOD Scores\\
\hline 
\multirow{2}{*}{MSP} & \multirow{2}{*}{Max} & \multirow{2}{*}{91.1/56.2/19.4} & \multirow{2}{*}{94.5/52.5/18.9} & \multirow{2}{*}{\textbf{85.8}/\textbf{69.2}/\textbf{32.9}} \\
& & & & \\
\hline 
\multirow{2}{*}{Prob} & Max & 79.2/64.1/22.9 & 93.4/53.7/19.3 & \textbf{75.8}/\textbf{74.0}/\textbf{36.0} \\
& Sum & 90.6/58.2/21.1 & 94.4/51.9/18.7 & \textbf{83.0}/\textbf{69.7}/\textbf{33.1} \\
\hline 
\multirow{2}{*}{Logits} & Max & \textbf{79.2}/64.1/22.9 & 93.4/53.7/19.3 & 84.8/\textbf{70.9}/\textbf{34.1} \\
& Sum & \textbf{94.5}/\textbf{49.0}/\textbf{17.8} & 97.3/47.6/17.2 &  98.8/38.6/14.3 \\
\hline
\multirow{2}{*}{ODIN} & Max & 79.6/64.3/23.4 & 94.0/53.4/19.3 & \textbf{75.3}/\textbf{74.4}/\textbf{36.9} \\
& Sum & 91.1/57.1/20.8 & 94.9/51.3/18.5  & \textbf{82.2}/\textbf{69.2}/\textbf{32.0} \\
\hline 
\multirow{2}{*}{Mahalanobis} & Max & \textbf{54.9}/69.9/26.1 & 93.5/51.2/17.6 &  63.5/\textbf{76.8}/\textbf{36.2} \\
& Sum & 93.3/66.0/25.2 & 94.2/49.1/16.9 & \textbf{86.6}/\textbf{73.3}/\textbf{36.5} \\
\hline 
\multirow{2}{*}{JointEnergy} & Max & \textbf{79.2}/64.1/22.9 & 93.4/53.7/19.3  & 83.5/\textbf{71.6}/\textbf{34.3} \\
& Sum & \textbf{79.5}/64.9/24.4 & 93.6/54.2/19.5 & 80.5/\textbf{72.8}/\textbf{37.4} \\
\hline 
\hline 
\multirow{2}{*}{Average} & Max & \textbf{77.2}/63.8/22.9 & 93.7/53.0/19.0 & 78.1/\textbf{72.8}/\textbf{35.1} \\
& Sum & 89.8/59.0/21.9 & 94.9/50.8/18.2 & \textbf{86.2}/\textbf{64.7}/\textbf{30.7} \\
\hline
\end{tabular}
\vspace{-1em}
\end{table}



\subsection{Main Results} 
With the aforementioned experimental settings, we evaluate various DIAEF scores on the given QA and Real MMD datasets and report the performance results in Table~\ref{tab:main_result} and \ref{result:real_mmd_clip}. The tables show that our proposed framework generally outperforms the results obtained using only images or dialogue across most metrics. In particular, the joint energy and Mahalanobis scores with the sum or max aggregation consistently perform well across most metrics. In addition, the naive probability and ODIN scores also show competitive performance. Interestingly, the max aggregation method tends to be more effective than the sum method. This could be because we are dealing with a multi-label problem. Adding up scores for all labels might introduce more noise, which confuses the OOD and ID scores and thus reduces detection performance. For dialogues, the performance is not as good as for images. This is because dialogues contain some noises, such as stopwords, that are unrelated to the labels, whereas images with segmentation are more directly related to the labels. However, even though the dialogue alone may not perform well, combining it with images could significantly enhance the OOD detection performance. The results demonstrate that DIEAF performs effectively when combining dialogue and image scores, especially when introducing mismatched pairs. 

\subsection{Analysis of Experimental Results}

To gain deeper insights into the proposed framework, we conduct several ablation studies to examine the impact of mismatched pairs, the effectiveness of $s(x_I, x_T)$, and the choices of $\alpha$ and $\gamma$.

\textbf{Effect of Mismatched Pairs.} To investigate the effect of the mismatched pairs, we conduct the experiments with the same setting by excluding the mismatched pair in the testing set and report the results in Table~\ref{tab:result_wo_mismatch}. Here, we only report FPR95 for simplicity and also compare the results by setting $\gamma = 0$ without introducing the dialogue-image similarity. 

From the table, it can be observed that when there are no mismatched pairs, setting $\gamma$ to 1 can actually harm our results to some extent. This is because, for OOD pairs without mismatched pairs, their similarity score $s(x_I, x_T)$ can still be high. In such cases, multiplying by the similarity can adversely affect OOD results. Setting $\gamma$ to 0 in these situations improves FPR95 results for most cases, indicating that simply combining image and dialogue modalities, even without mismatched pairs, performs better than the unimodality. Additionally, comparing Table~\ref{tab:main_result} and~\ref{tab:result_wo_mismatch}, we see that introducing mismatched pairs generally leads to worse performance than having no mismatched pairs. This demonstrates that mismatched pairs indeed pose a challenge for OOD detection. To achieve better results, we will further study the impact of $\gamma$ and $\alpha$ to optimize OOD detection performance.

\begin{table}[!t]
\centering
\caption{The comparison of \textbf{FPR95} performance (the lower the better) in \% with DIEAF framework under different scores \textbf{without} any mismatched pairs. \textbf{Bold} numbers are superior results for each DIAEF score and aggregation method.}\label{tab:result_wo_mismatch}
\begin{tabular}{l|c|c|c|c|c}
\hline
\multirow{2}{*}{OOD Scores} & \multirow{2}{*}{Aggregation} & \multicolumn{2}{c|}{Baseline} & \multirow{2}{*}{DIAEF ($\gamma = 0$)} & \multirow{2}{*}{DIAEF ($\gamma = 1$)} \\
 &  & \multicolumn{1}{c}{Image} & \multicolumn{1}{c|}{Dialogue} &  & \\
\hline
\multirow{2}{*}{MSP} & \multirow{2}{*}{Max} & \multirow{2}{*}{81.2} & \multirow{2}{*}{71.4} & \multirow{2}{*}{\textbf{69.4}} & \multirow{2}{*}{81.4} \\
& & & & & \\
\hline 
\multirow{2}{*}{Prob} & Max & 49.7 & 59.9 & \textbf{46.6} & 64.4 \\
& Sum & \textbf{63.6} & 91.2 & 72.7 & 77.1 \\
\hline 
\multirow{2}{*}{Logits} & Max & 49.7 & 59.9 & \textbf{45.7} & 47.6 \\
 & Sum & \textbf{90.1} & 99.7 & 98.1 & 96.2 \\
\hline 
\multirow{2}{*}{ODIN} & Max & \textbf{48.5} & 65.4 & \textbf{48.5} & 69.2 \\
& Sum & \textbf{64.2} & 91.2 & 72.4 & 79.3 \\
\hline 
\multirow{2}{*}{Mahalanobis} & Max & 35.5 & 57.5 & \textbf{34.3} & 37.8 \\
& Sum & 86.4 & 73.7 & 68.1 & \textbf{65.0} \\
\hline 
\multirow{2}{*}{JointEnergy} &  Max & 49.7 & 59.9 & \textbf{46.7} & 48.7 \\
& Sum & 47.4 & 58.6 & \textbf{45.7} & 47.6 \\
\hline
\hline 
\multirow{2}{*}{Average} & Max & 52.4 & 62.3 & \textbf{48.5} &  58.1 \\
 & Sum & \textbf{70.3} & 82.9 & 71.4 & 73.0 \\
\hline 
\end{tabular}
\vspace{-1em}
\end{table}

\textbf{Effect of VLM models.}  We further tested the performance of the DIAEF score function with the BLIP model \citep{li2022blip} under the same setting as CLIP (also see details in Appendix~\ref{apdsec:experimental_details}), and we report the results in Table~\ref{tab:blip_qa}. Even with BLIP, the pattern is still maintained as the proposed score achieves better performance compared to the single modality, and the framework handles mismatched and previously unseen OOD scenarios. 

\begin{table}[!t]
\centering
\caption{The comparison of OOD detection performance with QA dataset under BLIP extraction and different scores.} \label{tab:blip_qa}
\begin{tabular}{l|c|c|c|c}
\hline
\multicolumn{5}{c}{\textbf{FPR95}$\downarrow$ / \textbf{AUROC}$\uparrow$ / \textbf{AUPR}$\uparrow$} \\
\hline
\multirow{2}{*}{OOD Scores} & \multirow{2}{*}{Aggregation} & \multicolumn{2}{c|}{Baseline} & \multirow{2}{*}{DIAEF ($\gamma = 1$)} \\
 &  & \multicolumn{1}{c}{Image} & \multicolumn{1}{c|}{Dialogue} & \\
\hline
\multirow{2}{*}{MSP} & \multirow{2}{*}{Max} & \multirow{2}{*}{85.8/58.7/37.4} & \multirow{2}{*}{83.5/64.8/39.8} &  \multirow{2}{*}{\textbf{75.9}/\textbf{75.1}/\textbf{52.7}} \\
& & & & \\
\hline 
\multirow{2}{*}{Prob} & Max & \textbf{64.3}/71.2/45.1 & 80.5/67.1/42.2 & 67.0/\textbf{78.7}/\textbf{56.5} \\
 & Sum & 78.8/64.4/39.3 & 96.8/55.9/35.9 & \textbf{74.2}/\textbf{72.7}/\textbf{51.2} \\
\hline 
\multirow{2}{*}{Logits} & Max & 64.3/71.2/45.1 & 80.5/67.1/42.2 & \textbf{62.9}/\textbf{80.9}/\textbf{63.8} \\
 & Sum & \textbf{95.8}/\textbf{52.9}/\textbf{33.8} & 98.1/41.9/29.3 & 99.1/40.1/26.5 \\
\hline 
\multirow{2}{*}{ODIN} & Max & \textbf{63.9}/71.1/44.9 & 81.4/67.2/42.1 & 67.7/\textbf{79.3}/\textbf{57.7} \\
 & Sum & 79.1/64.2/39.2 & 97.0/56.1/36.0 & \textbf{74.5}/\textbf{72.5}/\textbf{50.9} \\
\hline 
\multirow{2}{*}{Mahalanobis} & Max & \textbf{46.9}/77.7/50.6 & 81.0/66.9/40.5 & 52.6/\textbf{87.7}/\textbf{75.4} \\
 & Sum & 79.7/71.5/46.2 & 92.5/59.0/35.9 & \textbf{67.6}/\textbf{78.7}/\textbf{61.0} \\
\hline 
\multirow{2}{*}{JointEnergy} & Max & 64.3/71.2/45.1 & 80.5/67.1/42.2 &  \textbf{62.8}/\textbf{81.0}/\textbf{63.8} \\
 & Sum & 63.0/71.8/45.8 & 80.4/67.3/43.2 &  \textbf{61.7}/\textbf{80.7}/\textbf{64.5} \\
\hline 
\hline 
\multirow{2}{*}{Average} & Max & 64.9/70.2/44.7 & 81.2/66.7/41.5 & \textbf{64.8}/\textbf{80.5}/\textbf{61.2} \\
 & Sum & 79.3/65.0/40.9 & 93.0/56.0/36.1 & \textbf{75.4}/\textbf{68.9}/\textbf{50.8} \\
\hline
\end{tabular}
\vspace{-1em}
\end{table}

\textbf{Effect of $s(x_I, x_T)$.} We draw Figure~\ref{fig:image-illustration} for image scores as an illustration that consists of three subplots showing the change of score distribution with $s(x_I, x_T)$ introduced. Here Figures~\ref{fig:id-distribution} and~\ref{fig:energy-sum-image} present the distribution of $s_I(x_T, x)$ and $s_I(x_T, x)s(x_I, X_T)$ for both ID and OOD data with FPR95 highlighted, respectively.  Figure~\ref{fig:joint-distribution} displays the joint distribution of $P(s, s_I)$ for both ID and OOD data, with the x-axis representing the similarity score $s(x_I, x_T)$ and the y-axis representing the image score $s_I(x_I, \mathbf{y})$, with density indicated by colour intensity and marginal distributions shown as histograms. The figures show that without multiplying by $s(x_I, x_T)$, the distributions of ID and OOD are not well-separated, and the FPR95 is around 0.58. However, after applying the similarity score, the distributions of ID and OOD become more apart, and the FPR95 decreases to approximately 0.54. This occurs because, when examining the joint distribution, we find that for the ID data, most similarity values are around 0.25. In contrast, there are two peaks for the OOD data: one around 0.25 (for matched pairs) and another around 0.15 (for mismatched pairs). This indicates that if we multiply by this similarity, the mismatched OOD pairs would have lower scores, making distinguishing between ID and OOD easier.

\begin{figure}[!t]
    \centering
    \begin{minipage}[b]{0.48\textwidth}
        \centering
        \begin{subfigure}[b]{0.48\textwidth}
            \centering
            \includegraphics[width=\textwidth]{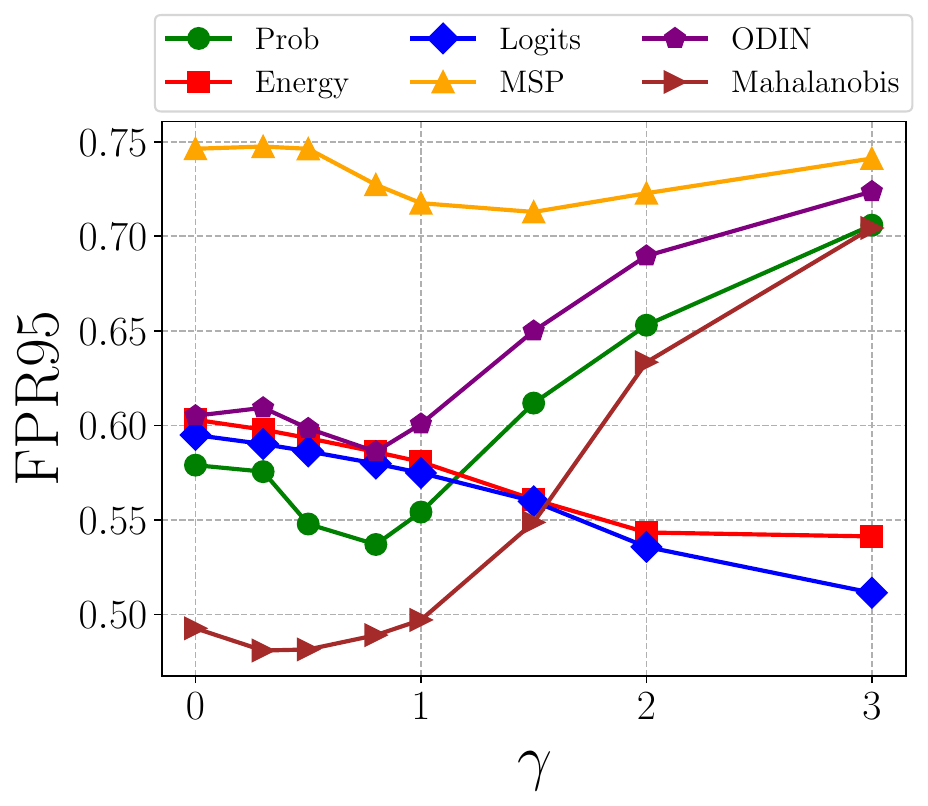}  
            \caption{Max Aggregation}
        \end{subfigure}
        \begin{subfigure}[b]{0.48\textwidth}
            \centering
            \includegraphics[width=\textwidth]{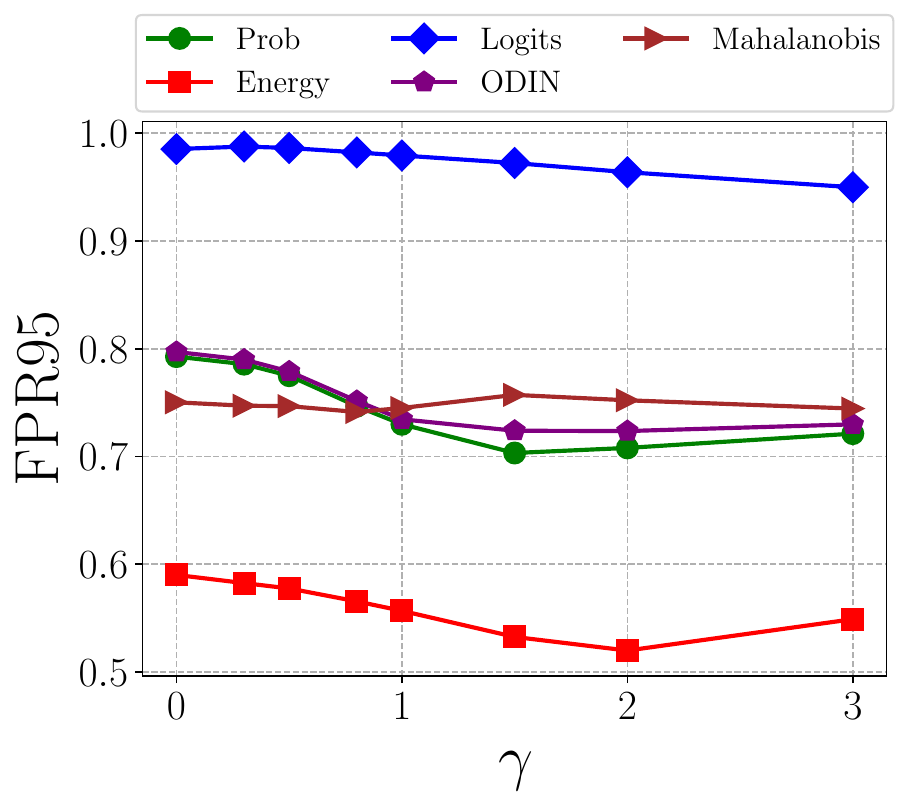} 
            \caption{Sum Aggregation}
        \end{subfigure}
        \caption{Effect of $\gamma$ with $\alpha = 0.5$} \label{fig:effect_gamma}
    \end{minipage}
    \hspace{0.01\textwidth}
    \begin{minipage}[b]{0.48\textwidth}
    \begin{subfigure}[b]{0.48\textwidth}
        \centering
        \includegraphics[width=\textwidth]{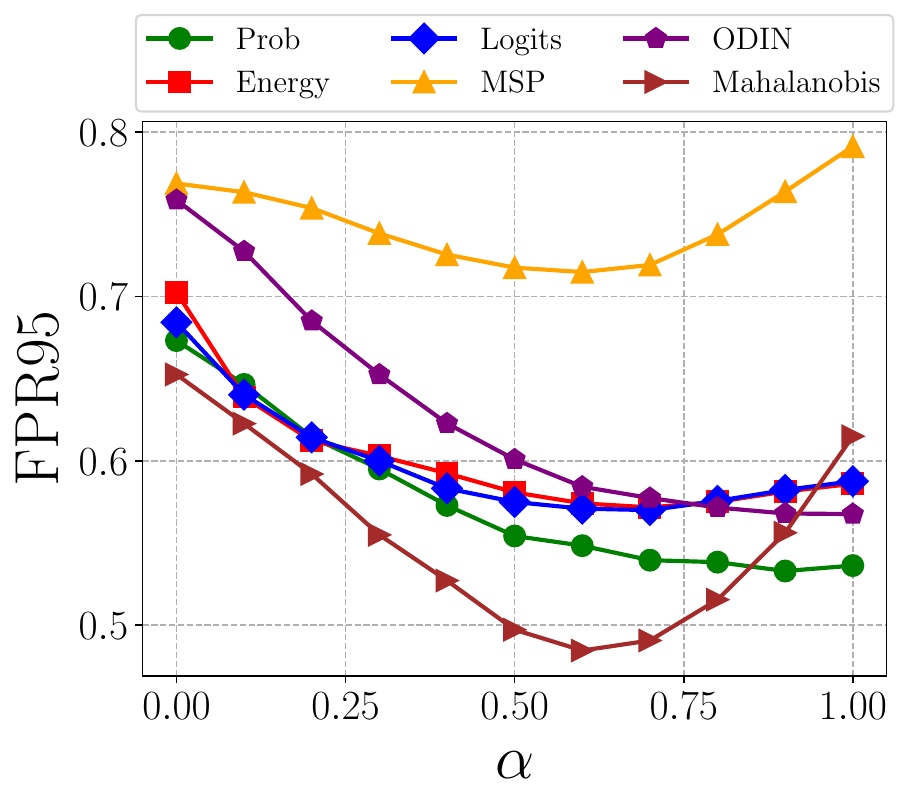}  
        \caption{Max Aggregation}
    \end{subfigure}
    \begin{subfigure}[b]{0.48\textwidth}
        \centering
        \includegraphics[width=\textwidth]{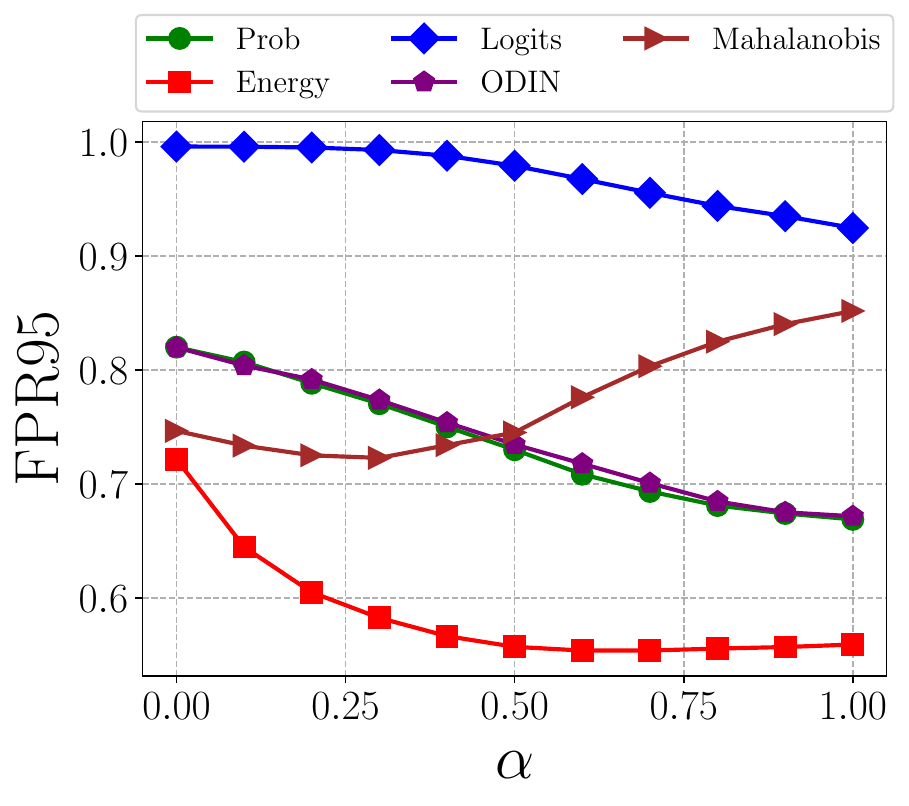} 
        \caption{Sum Aggregation}
    \end{subfigure}
    \caption{Effect of $\alpha$ with $\gamma = 1$} \label{fig:effect_alpha}
    \end{minipage}
    \vspace{-1em}
\end{figure}

\begin{figure}[!t]
    \centering
    \begin{subfigure}[b]{0.3\textwidth}
        \centering 
        \includegraphics[scale=0.3]{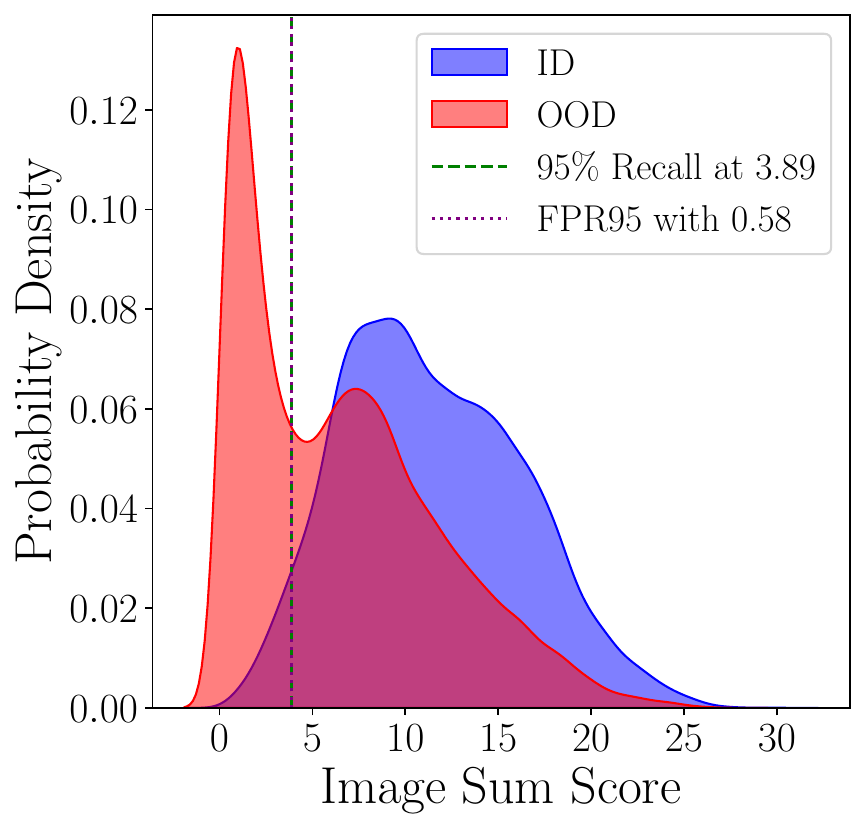}
        \caption{Distribution of $s$} \label{fig:id-distribution}
    \end{subfigure}
    \begin{subfigure}[b]{0.3\textwidth}
        \centering
        \includegraphics[scale=0.3]{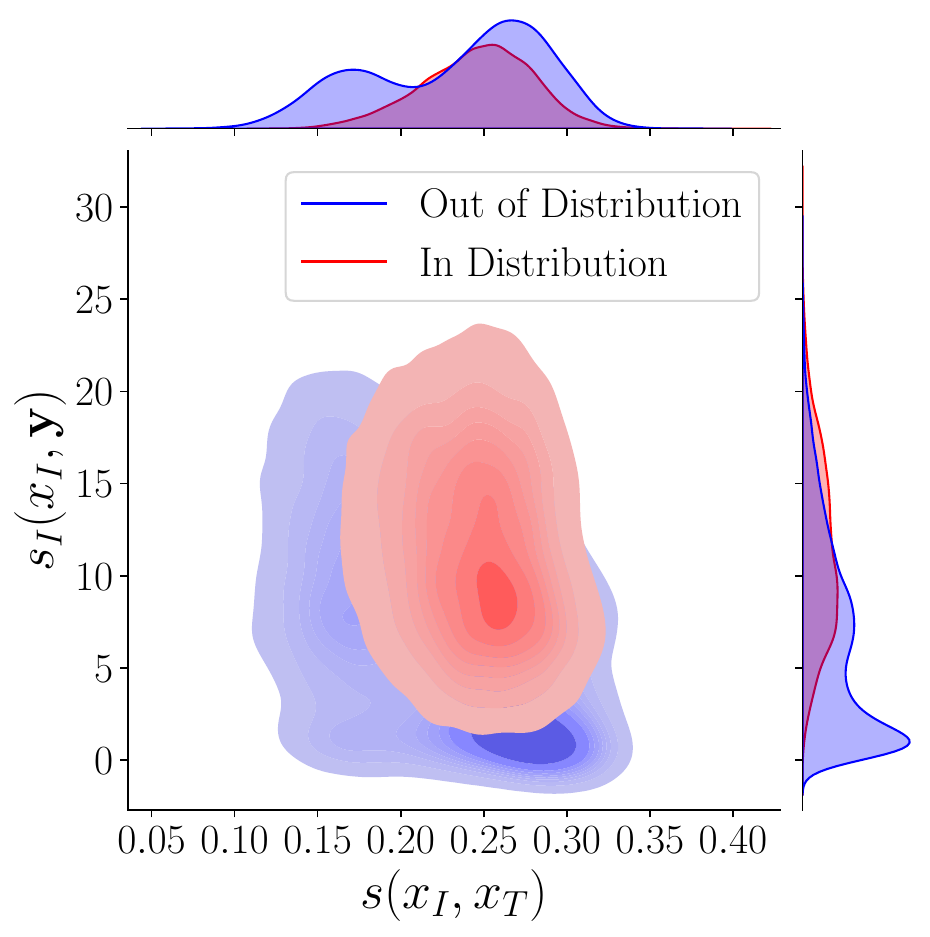}
        \caption{Joint distribution of $P(s, s_I)$} \label{fig:joint-distribution}
    \end{subfigure}
    \begin{subfigure}[b]{0.3\textwidth}
        \centering
        \includegraphics[scale=0.3]{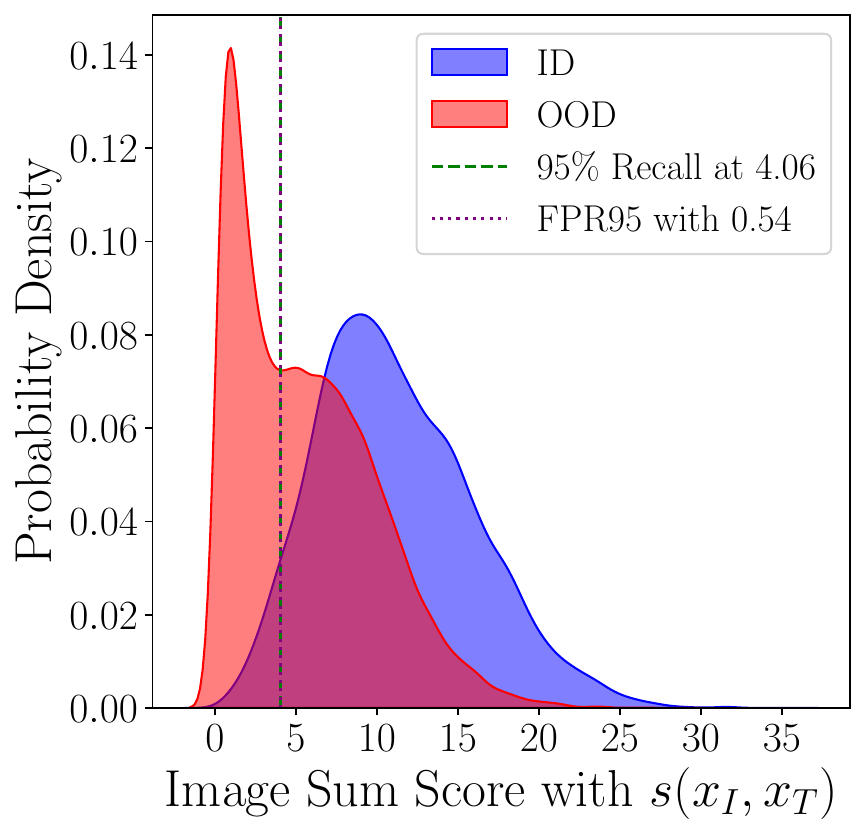}
        \caption{Distribution of $s_I*s$} \label{fig:energy-sum-image}
    \end{subfigure}
    \caption{An illustration of the effectiveness of $s(x_I, x_T)$}\label{fig:image-illustration}
    \vspace{-1em}
\end{figure}

\textbf{Effect of $\gamma$.} Intuitively, when $\gamma$ is smaller, similar and dissimilar dialogue-image pairs will have approximately the same alignment score. Conversely, when $\gamma$ is larger, the score differences between similar and dissimilar pairs become more pronounced, emphasizing the role of dialogue-image similarity in OOD detection. Therefore, we selected several values of $\gamma$ ranging from 0 (i.e., not using dialogue-image similarity) to 3 and plotted the curves under different score aggregation methods. Figure~\ref{fig:effect_gamma} shows that the optimal value of $\gamma$ varies significantly depending on the choice of score and aggregation method. For instance, with max aggregation, most methods show a trend where the FPR95 initially decreases with increasing $\gamma$ and then rises again, with the optimal value around 1. However, the Energy and Logits methods show a trend of decreasing FPR95 as $\gamma$ increases, indicating these methods are more sensitive to misalignment. On the other hand, for the sum aggregation method, changing the $\gamma$ value has a limited effect on OOD detection. This could be because the sum method combines too much redundant label information, and the enhancement term plays a major role. If the enhancement term is not particularly effective, the impact of misalignment is minimal.

\textbf{Effect of $\alpha$.} When $\alpha$ is small, we place more emphasis on the image score along with the alignment term for OOD detection; conversely, when $\alpha$ is large, we emphasize more on the dialogue score. We plotted the results for different score aggregations in Figure~\ref{fig:effect_alpha}. From the max aggregation results, we observe that using only the image or dialogue scores is not the most effective approach. Instead, combining both and selecting a value around 0.5 yields the best results, demonstrating the effectiveness of our framework. In the sum aggregation plot, we see that for most methods (except for Mahalanobis), the performance in terms of FPR95 improves as $\alpha$ increases. This indicates that images do not significantly contribute to recognition for the sum aggregation compared to dialogue. 

\section{Conclusion and Limitation}
This paper introduces a cross-modal OOD score framework, DIAEF, designed to expand OOD detection in cross-modal QA systems by integrating images and dialogues. DIAEF combines alignment scores between dialogue-image pairs with an enhancing term that leverages both the image and dialogue. Experimental results demonstrate DIAEF's superiority over baseline methods with general metrics such as FPR95 and show the framework's effectiveness. However, there are some spaces for future work. First, due to the scarcity of datasets, we initially validated our framework on VisDial and demonstrated its effectiveness. More dialogue-image datasets are worth exploring for validation. Second, the existing scores have proven the effectiveness of this framework, but further improvements could be achieved by applying some transformations or smoothing techniques to make the distributions of ID and OOD more distinct. Finally, this framework is applicable to more visual language models, such as multimodal models like BLIP, and can further enhance OOD performance using various image-text matching criteria.


\bibliographystyle{apalike}
\bibliography{sample}

\begin{thebibliography}{}

\bibitem[Aminu et~al., 2021]{aminu2021enhanced}
Aminu, E.~F., Oyefolahan, I.~O., Abdullahi, M.~B., and Salaudeen, M.~T. (2021).
\newblock An enhanced wordnet query expansion approach for ontology based
  information retrieval system.
\newblock In {\em Information and Communication Technology and Applications:
  Third International Conference, ICTA 2020, Minna, Nigeria, November 24--27,
  2020, Revised Selected Papers 3}, pages 675--688. Springer.

\bibitem[Arora et~al., 2021]{arora2021types}
Arora, U., Huang, W., and He, H. (2021).
\newblock Types of out-of-distribution texts and how to detect them.
\newblock {\em arXiv preprint arXiv:2109.06827}.

\bibitem[Azizi et~al., 2023]{azizi2023robust}
Azizi, S., Culp, L., Freyberg, J., Mustafa, B., Baur, S., Kornblith, S., Chen,
  T., Tomasev, N., Mitrovi{\'c}, J., Strachan, P., et~al. (2023).
\newblock Robust and data-efficient generalization of self-supervised machine
  learning for diagnostic imaging.
\newblock {\em Nature Biomedical Engineering}, 7(6):756--779.

\bibitem[Basart et~al., 2022]{basart2022scaling}
Basart, S., Mantas, M., Mohammadreza, M., Jacob, S., and Dawn, S. (2022).
\newblock Scaling out-of-distribution detection for real-world settings.
\newblock In {\em International Conference on Machine Learning}.

\bibitem[Cai and Li, 2023]{cai2023out}
Cai, M. and Li, Y. (2023).
\newblock Out-of-distribution detection via frequency-regularized generative
  models.
\newblock In {\em Proceedings of the IEEE/CVF Winter Conference on Applications
  of Computer Vision}, pages 5521--5530.

\bibitem[Chen et~al., 2021]{chen2021atom}
Chen, J., Li, Y., Wu, X., Liang, Y., and Jha, S. (2021).
\newblock Atom: Robustifying out-of-distribution detection using outlier
  mining.
\newblock In {\em Machine Learning and Knowledge Discovery in Databases.
  Research Track: European Conference, ECML PKDD 2021, Bilbao, Spain, September
  13--17, 2021, Proceedings, Part III 21}, pages 430--445. Springer.

\bibitem[Chen et~al., 2022]{chen2022rethinking}
Chen, L., Zheng, Y., and Xiao, J. (2022).
\newblock Rethinking data augmentation for robust visual question answering.
\newblock In {\em European conference on computer vision}, pages 95--112.
  Springer.

\bibitem[Dai et~al., 2023]{dai2023exploring}
Dai, Y., Lang, H., Zeng, K., Huang, F., and Li, Y. (2023).
\newblock Exploring large language models for multi-modal out-of-distribution
  detection.
\newblock {\em arXiv preprint arXiv:2310.08027}.

\bibitem[Das et~al., 2017]{das2017visual}
Das, A., Kottur, S., Gupta, K., Singh, A., Yadav, D., Moura, J.~M., Parikh, D.,
  and Batra, D. (2017).
\newblock Visual dialog.
\newblock In {\em Proceedings of the IEEE conference on computer vision and
  pattern recognition}, pages 326--335.

\bibitem[Deka et~al., 2023]{deka2023distributed}
Deka, R., Dutta, P.~P., and Dutta, A. (2023).
\newblock Distributed feature representations for out-of-domain detection in
  dialogue systems.
\newblock In {\em 2023 IEEE Guwahati Subsection Conference (GCON)}, pages 1--6.
  IEEE.

\bibitem[Dhuliawala et~al., 2023]{dhuliawala2023variational}
Dhuliawala, S.~Z., Sachan, M., and Allen, C. (2023).
\newblock Variational classification: A probabilistic generalization of the
  softmax classifier.
\newblock {\em Transactions on Machine Learning Research}.

\bibitem[Dosyn et~al., 2022]{dosyn2022domain}
Dosyn, D., Daradkeh, Y.~I., Kovalevych, V., Luchkevych, M., and Kis, Y. (2022).
\newblock Domain ontology learning using link grammar parser and wordnet.
\newblock In {\em MoMLeT+ DS}, pages 14--36.

\bibitem[Du et~al., 2022]{du2022unknown}
Du, X., Wang, X., Gozum, G., and Li, Y. (2022).
\newblock Unknown-aware object detection: Learning what you don't know from
  videos in the wild.
\newblock In {\em Proceedings of the IEEE/CVF Conference on Computer Vision and
  Pattern Recognition}, pages 13678--13688.

\bibitem[Fellbaum, 2010]{fellbaum2010wordnet}
Fellbaum, C. (2010).
\newblock Wordnet.
\newblock In {\em Theory and applications of ontology: computer applications},
  pages 231--243. Springer.

\bibitem[Feng et~al., 2022]{feng2022mmdialog}
Feng, J., Sun, Q., Xu, C., Zhao, P., Yang, Y., Tao, C., Zhao, D., and Lin, Q.
  (2022).
\newblock Mmdialog: A large-scale multi-turn dialogue dataset towards
  multi-modal open-domain conversation.
\newblock {\em arXiv preprint arXiv:2211.05719}.

\bibitem[Fort et~al., 2021]{fort2021exploring}
Fort, S., Ren, J., and Lakshminarayanan, B. (2021).
\newblock Exploring the limits of out-of-distribution detection.
\newblock {\em Advances in Neural Information Processing Systems},
  34:7068--7081.

\bibitem[Gao et~al., 2024a]{gao2024interaction}
Gao, R., Roever, C., and Lau, J.~H. (2024a).
\newblock Interaction matters: An evaluation framework for interactive dialogue
  assessment on english second language conversations.
\newblock {\em arXiv preprint arXiv:2407.06479}.

\bibitem[Gao et~al., 2024b]{gao2024cnima}
Gao, R., Wu, J., Roever, C., Wu, X., Wu, J., Lv, L., and Lau, J.~H. (2024b).
\newblock Cnima: A universal evaluation framework and automated approach for
  assessing second language dialogues.
\newblock {\em arXiv preprint arXiv:2408.16518}.

\bibitem[Gao and Wang, 2024]{gao2024listenership}
Gao, W. and Wang, M. (2024).
\newblock Listenership always matters: active listening ability in l2 business
  english paired speaking tasks.
\newblock {\em International Review of Applied Linguistics in Language
  Teaching}, (0).

\bibitem[Graham et~al., 2023]{graham2023denoising}
Graham, M.~S., Pinaya, W.~H., Tudosiu, P.-D., Nachev, P., Ourselin, S., and
  Cardoso, J. (2023).
\newblock Denoising diffusion models for out-of-distribution detection.
\newblock In {\em Proceedings of the IEEE/CVF Conference on Computer Vision and
  Pattern Recognition}, pages 2947--2956.

\bibitem[Hendrycks et~al., 2019]{hendrycks2019benchmark}
Hendrycks, D., Basart, S., Mazeika, M., Mostajabi, M., Steinhardt, J., and
  Song, D. (2019).
\newblock A benchmark for anomaly segmentation.
\newblock {\em arXiv preprint arXiv:1911.11132}, 1(2):5.

\bibitem[Hendrycks and Gimpel, 2016]{hendrycks2016baseline}
Hendrycks, D. and Gimpel, K. (2016).
\newblock A baseline for detecting misclassified and out-of-distribution
  examples in neural networks.
\newblock {\em arXiv preprint arXiv:1610.02136}.

\bibitem[Hendrycks et~al., 2020]{hendrycks2020pretrained}
Hendrycks, D., Liu, X., Wallace, E., Dziedzic, A., Krishnan, R., and Song, D.
  (2020).
\newblock Pretrained transformers improve out-of-distribution robustness.
\newblock {\em arXiv preprint arXiv:2004.06100}.

\bibitem[Hsu et~al., 2020]{hsu2020generalized}
Hsu, Y.-C., Shen, Y., Jin, H., and Kira, Z. (2020).
\newblock Generalized odin: Detecting out-of-distribution image without
  learning from out-of-distribution data.
\newblock In {\em Proceedings of the IEEE/CVF conference on computer vision and
  pattern recognition}, pages 10951--10960.

\bibitem[Huang et~al., 2008]{huang2008similarity}
Huang, A. et~al. (2008).
\newblock Similarity measures for text document clustering.
\newblock In {\em Proceedings of the sixth new zealand computer science
  research student conference (NZCSRSC2008), Christchurch, New Zealand},
  volume~4, pages 9--56.

\bibitem[Kadhim et~al., 2014]{kadhim2014feature}
Kadhim, A.~I., Cheah, Y.-N., Ahamed, N.~H., and Salman, L.~A. (2014).
\newblock Feature extraction for co-occurrence-based cosine similarity score of
  text documents.
\newblock In {\em 2014 IEEE student conference on research and development},
  pages 1--4. IEEE.

\bibitem[Kottur et~al., 2019]{kottur2019clevr}
Kottur, S., Moura, J.~M., Parikh, D., Batra, D., and Rohrbach, M. (2019).
\newblock Clevr-dialog: A diagnostic dataset for multi-round reasoning in
  visual dialog.
\newblock {\em arXiv preprint arXiv:1903.03166}.

\bibitem[Ktena et~al., 2024]{ktena2024generative}
Ktena, I., Wiles, O., Albuquerque, I., Rebuffi, S.-A., Tanno, R., Roy, A.~G.,
  Azizi, S., Belgrave, D., Kohli, P., Cemgil, T., et~al. (2024).
\newblock Generative models improve fairness of medical classifiers under
  distribution shifts.
\newblock {\em Nature Medicine}, pages 1--8.

\bibitem[Lahitani et~al., 2016]{lahitani2016cosine}
Lahitani, A.~R., Permanasari, A.~E., and Setiawan, N.~A. (2016).
\newblock Cosine similarity to determine similarity measure: Study case in
  online essay assessment.
\newblock In {\em 2016 4th International conference on cyber and IT service
  management}, pages 1--6. IEEE.

\bibitem[Lang et~al., 2023]{lang2023out}
Lang, H., Zheng, Y., Hui, B., Huang, F., and Li, Y. (2023).
\newblock Out-of-domain intent detection considering multi-turn dialogue
  contexts.
\newblock {\em arXiv preprint arXiv:2305.03237}.

\bibitem[Lee et~al., 2018]{lee2018simple}
Lee, K., Lee, K., Lee, H., and Shin, J. (2018).
\newblock A simple unified framework for detecting out-of-distribution samples
  and adversarial attacks.
\newblock {\em Advances in neural information processing systems}, 31.

\bibitem[Lee et~al., 2021]{lee2021constructing}
Lee, N., Shin, S., Choo, J., Choi, H.-J., and Myaeng, S.-H. (2021).
\newblock Constructing multi-modal dialogue dataset by replacing text with
  semantically relevant images.
\newblock {\em arXiv preprint arXiv:2107.08685}.

\bibitem[Li and Han, 2013]{li2013distance}
Li, B. and Han, L. (2013).
\newblock Distance weighted cosine similarity measure for text classification.
\newblock In {\em Intelligent Data Engineering and Automated Learning--IDEAL
  2013: 14th International Conference, IDEAL 2013, Hefei, China, October 20-23,
  2013. Proceedings 14}, pages 611--618. Springer.

\bibitem[Li et~al., 2022]{li2022blip}
Li, J., Li, D., Xiong, C., and Hoi, S. (2022).
\newblock Blip: Bootstrapping language-image pre-training for unified
  vision-language understanding and generation.
\newblock In {\em International conference on machine learning}, pages
  12888--12900. PMLR.

\bibitem[Li and Lin, 2021]{li2021encoder}
Li, M. and Lin, J. (2021).
\newblock Encoder adaptation of dense passage retrieval for open-domain
  question answering.
\newblock {\em arXiv preprint arXiv:2110.01599}.

\bibitem[Li et~al., 2017]{li2017dailydialog}
Li, Y., Su, H., Shen, X., Li, W., Cao, Z., and Niu, S. (2017).
\newblock Dailydialog: A manually labelled multi-turn dialogue dataset.
\newblock {\em arXiv preprint arXiv:1710.03957}.

\bibitem[Liang et~al., 2017]{liang2017enhancing}
Liang, S., Li, Y., and Srikant, R. (2017).
\newblock Enhancing the reliability of out-of-distribution image detection in
  neural networks.
\newblock {\em arXiv preprint arXiv:1706.02690}.

\bibitem[Lin et~al., 2014]{lin2014microsoft}
Lin, T.-Y., Maire, M., Belongie, S., Hays, J., Perona, P., Ramanan, D.,
  Doll{\'a}r, P., and Zitnick, C.~L. (2014).
\newblock Microsoft coco: Common objects in context.
\newblock In {\em Computer Vision--ECCV 2014: 13th European Conference, Zurich,
  Switzerland, September 6-12, 2014, Proceedings, Part V 13}, pages 740--755.
  Springer.

\bibitem[Liu et~al., 2021]{liu2021self}
Liu, H., HaoChen, J.~Z., Gaidon, A., and Ma, T. (2021).
\newblock Self-supervised learning is more robust to dataset imbalance.
\newblock {\em arXiv preprint arXiv:2110.05025}.

\bibitem[Liu et~al., 2023]{liu2023gen}
Liu, X., Lochman, Y., and Zach, C. (2023).
\newblock Gen: Pushing the limits of softmax-based out-of-distribution
  detection.
\newblock In {\em Proceedings of the IEEE/CVF Conference on Computer Vision and
  Pattern Recognition}, pages 23946--23955.

\bibitem[Marciniak, 2020]{marciniak2020wordnet}
Marciniak, J. (2020).
\newblock Wordnet as a backbone of domain and application conceptualizations in
  systems with multimodal data.
\newblock In {\em Proceedings of the LREC 2020 Workshop on Multimodal Wordnets
  (MMW2020)}, pages 25--32.

\bibitem[Martin, 1995]{martin1995using}
Martin, P. (1995).
\newblock Using the wordnet concept catalog and a relation hierarchy for
  knowledge acquisition.
\newblock In {\em Proc. 4th Peirce Workshop}.

\bibitem[Mei et~al., 2024]{mei2024multi}
Mei, Y., Wang, X., Zhang, D., and Wang, X. (2024).
\newblock Multi-label out-of-distribution detection with spectral normalized
  joint energy.
\newblock {\em arXiv preprint arXiv:2405.04759}.

\bibitem[{Miller} et~al., 2017]{miller2017parlai}
{Miller}, A.~H., {Feng}, W., {Fisch}, A., {Lu}, J., {Batra}, D., {Bordes}, A.,
  {Parikh}, D., and {Weston}, J. (2017).
\newblock Parlai: A dialog research software platform.
\newblock {\em arXiv preprint arXiv:{1705.06476}}.

\bibitem[Miyai et~al., 2024]{miyai2024locoop}
Miyai, A., Yu, Q., Irie, G., and Aizawa, K. (2024).
\newblock Locoop: Few-shot out-of-distribution detection via prompt learning.
\newblock {\em Advances in Neural Information Processing Systems}, 36.

\bibitem[Niu and Zhang, 2021]{niu2021introspective}
Niu, Y. and Zhang, H. (2021).
\newblock Introspective distillation for robust question answering.
\newblock {\em Advances in Neural Information Processing Systems},
  34:16292--16304.

\bibitem[Radford et~al., 2021]{radford2021learning}
Radford, A., Kim, J.~W., Hallacy, C., Ramesh, A., Goh, G., Agarwal, S., Sastry,
  G., Askell, A., Mishkin, P., Clark, J., et~al. (2021).
\newblock Learning transferable visual models from natural language
  supervision.
\newblock In {\em International conference on machine learning}, pages
  8748--8763. PMLR.

\bibitem[Rahutomo et~al., 2012]{rahutomo2012semantic}
Rahutomo, F., Kitasuka, T., Aritsugi, M., et~al. (2012).
\newblock Semantic cosine similarity.
\newblock In {\em The 7th international student conference on advanced science
  and technology ICAST}, volume~4, page~1. University of Seoul South Korea.

\bibitem[Ram{\'e} et~al., 2023]{rame2023model}
Ram{\'e}, A., Ahuja, K., Zhang, J., Cord, M., Bottou, L., and Lopez-Paz, D.
  (2023).
\newblock Model ratatouille: Recycling diverse models for out-of-distribution
  generalization.
\newblock In {\em International Conference on Machine Learning}, pages
  28656--28679. PMLR.

\bibitem[Salvador et~al., 2017]{salvador2017learning}
Salvador, A., Hynes, N., Aytar, Y., Marin, J., Ofli, F., Weber, I., and
  Torralba, A. (2017).
\newblock Learning cross-modal embeddings for cooking recipes and food images.
\newblock In {\em Proceedings of the IEEE conference on computer vision and
  pattern recognition}, pages 3020--3028.

\bibitem[Seo et~al., 2017]{seo2017visual}
Seo, P.~H., Lehrmann, A., Han, B., and Sigal, L. (2017).
\newblock Visual reference resolution using attention memory for visual dialog.
\newblock {\em Advances in neural information processing systems}, 30.

\bibitem[Wallin et~al., 2024]{wallin2024improving}
Wallin, E., Svensson, L., Kahl, F., and Hammarstrand, L. (2024).
\newblock Improving open-set semi-supervised learning with self-supervision.
\newblock In {\em Proceedings of the IEEE/CVF Winter Conference on Applications
  of Computer Vision}, pages 2356--2365.

\bibitem[Wang et~al., 2021]{wang2021can}
Wang, H., Liu, W., Bocchieri, A., and Li, Y. (2021).
\newblock Can multi-label classification networks know what they don’t know?
\newblock {\em Advances in Neural Information Processing Systems},
  34:29074--29087.

\bibitem[Wang et~al., 2022]{wang2022multi}
Wang, L., Huang, S., Huangfu, L., Liu, B., and Zhang, X. (2022).
\newblock Multi-label out-of-distribution detection via exploiting sparsity and
  co-occurrence of labels.
\newblock {\em Image and Vision Computing}, 126:104548.

\bibitem[Wang et~al., 2024]{wang2024learning}
Wang, Q., Fang, Z., Zhang, Y., Liu, F., Li, Y., and Han, B. (2024).
\newblock Learning to augment distributions for out-of-distribution detection.
\newblock {\em Advances in Neural Information Processing Systems}, 36.

\bibitem[Wei et~al., 2015]{wei2015hcp}
Wei, Y., Xia, W., Lin, M., Huang, J., Ni, B., Dong, J., Zhao, Y., and Yan, S.
  (2015).
\newblock Hcp: A flexible cnn framework for multi-label image classification.
\newblock {\em IEEE transactions on pattern analysis and machine intelligence},
  38(9):1901--1907.

\bibitem[Wu et~al., 2023]{wu2023bayesian}
Wu, X., Manton, J.~H., Aickelin, U., and Zhu, J. (2023).
\newblock A bayesian approach to (online) transfer learning: Theory and
  algorithms.
\newblock {\em Artificial Intelligence}, 324:103991.

\bibitem[Wu et~al., 2024]{wu2024generalization}
Wu, X., Manton, J.~H., Aickelin, U., and Zhu, J. (2024).
\newblock On the generalization for transfer learning: An information-theoretic
  analysis.
\newblock {\em IEEE Transactions on Information Theory}.

\bibitem[Yang et~al., 2022]{yang2022glue}
Yang, L., Zhang, S., Qin, L., Li, Y., Wang, Y., Liu, H., Wang, J., Xie, X., and
  Zhang, Y. (2022).
\newblock Glue-x: Evaluating natural language understanding models from an
  out-of-distribution generalization perspective.
\newblock {\em arXiv preprint arXiv:2211.08073}.

\bibitem[Ye et~al., 2023]{ye2023robust}
Ye, H., Ding, Y., Li, J., and Ng, H.~T. (2023).
\newblock Robust question answering against distribution shifts with test-time
  adaptation: An empirical study.
\newblock {\em arXiv preprint arXiv:2302.04618}.

\bibitem[Yu et~al., 2019]{yu2019you}
Yu, X., Zhang, H., Song, Y., Song, Y., and Zhang, C. (2019).
\newblock What you see is what you get: Visual pronoun coreference resolution
  in dialogues.
\newblock {\em arXiv preprint arXiv:1909.00421}.

\bibitem[Yuan et~al., 2024]{yuan2024revisiting}
Yuan, L., Chen, Y., Cui, G., Gao, H., Zou, F., Cheng, X., Ji, H., Liu, Z., and
  Sun, M. (2024).
\newblock Revisiting out-of-distribution robustness in nlp: Benchmarks,
  analysis, and llms evaluations.
\newblock {\em Advances in Neural Information Processing Systems}, 36.

\bibitem[Zhang and Taneva-Popova, 2023]{zhang2023theoretical}
Zhang, D. and Taneva-Popova, B. (2023).
\newblock A theoretical analysis of out-of-distribution detection in
  multi-label classification.
\newblock In {\em Proceedings of the 2023 ACM SIGIR International Conference on
  Theory of Information Retrieval}, pages 275--282.

\bibitem[Zheng et~al., 2024]{zheng2024out}
Zheng, H., Wang, Q., Fang, Z., Xia, X., Liu, F., Liu, T., and Han, B. (2024).
\newblock Out-of-distribution detection learning with unreliable
  out-of-distribution sources.
\newblock {\em Advances in Neural Information Processing Systems}, 36.

\bibitem[Zheng et~al., 2020]{zheng2020out}
Zheng, Y., Chen, G., and Huang, M. (2020).
\newblock Out-of-domain detection for natural language understanding in dialog
  systems.
\newblock {\em IEEE/ACM Transactions on Audio, Speech, and Language
  Processing}, 28:1198--1209.

\end{thebibliography}

\newpage 
\appendix
\section{Experiment Details} \label{apdsec:experimental_details}
The dataset stats are summarized as follows:

\begin{table}[h]
    \caption{Statistics of Visdial QA dataset} \label{tab:stats_qa_mmd}
    \centering
    \begin{tabular}{c|cccc} 
    \hline 
     Stats   &  Matched & Mismatched &  ID & OOD\\
    \hline 
    \# Pair     &  122K & 10K & 95K & 37K\\ 
    \hline
    \# Train   & 77K & 0 & 77K & 0\\
    \hline 
    \# Test & 45K & 10K & 18K & 37K \\
    \hline 
    \# Turn per dialog & \multicolumn{4}{c}{10}\\
    \hline 
    \# Categories &   \multicolumn{4}{c}{80}\\
    \hline 
    \# Supercategories & \multicolumn{4}{c}{12}\\
    \hline 
    \end{tabular}
\end{table}

\begin{table}[h]
    \caption{Statistics of Real MMD dataset} \label{tab:stats_real_mmd}
    \centering
    \begin{tabular}{c|cccc} 
    \hline 
     Stats   &  Matched & Mismatched &  ID & OOD\\
    \hline 
    \# Pair     &  17K & 8K & 12.7K & 12.2K\\ 
    \hline
    \# Train   & 10.2K & 0 & 10.2K & 0\\
    \hline 
    \# Test & 14.7K & 8K & 2.5K & 12.2K \\
    \hline 
    \# Turn per dialog & \multicolumn{4}{c}{5 $\sim$ 15}\\
    \hline 
    \# Categories &   \multicolumn{4}{c}{80}\\
    \hline 
    \# Supercategories & \multicolumn{4}{c}{12}\\
    \hline 
    \end{tabular}
\end{table}

We give detailed experimental settings in the following table.
\begin{table}[h]
    \centering
    \caption{Experimental Details}
    \label{tab:experimental_details}
    \begin{tabular}{p{5cm}|p{7cm}}
    \hline 
    Parameters & Configurations\\
    \hline 
    $\gamma$     &  1 \\
    $\alpha$     &  0.5 \\
    Image Encoder & CLIP Vi-T B/32 or BLIP ITM Base\\
    Dialogue Encoder & CLIP Vi-T B/32 or BLIP ITM Base\\
    $s(x_I, x_T)$ & Cosine Similarity \\
    Label Extractor & 5-Layer DNN with size  [512/256, 256, 128, 64, 11] \\
    Activation Function & Relu \& Sigmoid\\
    Batch Size  & 32 \\
    Learning Rate & 0.001 \\ 
    Optimizer & Adam \\
    ID label & \textit{person}, \textit{kitchen}, \textit{food}, \textit{sports}, \textit{electronic}, \textit{accessory}, \textit{furniture}, \textit{indoor}, \textit{appliance}, \textit{vehicle}, \textit{outdoor} \\ 
    OOD label & \textit{animal} \\
    
    $\eta$ in ODIN & 0.001 \\
    $T$ in ODIN & 1 \\ 
    Image Features in Mahalanobis & CLIP/BLIP(Image) \\
    Text Features in Mahalanobis & CLIP/BLIP(Dialogue) \\
    \hline 
    \end{tabular}
\end{table}

\section{Theoretical Justification}\label{apd:theoretical_analysis}
\begin{assumption}\label{assump:asp1}
We denote ID distribution as $P(x_I, x_T, y)$ and OOD distribution as $\tilde{P}(x_I, x_T, y)$ where $\tilde{P}$ may differ from $P$ in terms of the following assumptions.
\begin{itemize}
\item \textbf{Case 1:} The image and text match, but labels are out of the set, namely:
$$\mathbb{E}_{P(x_I, x_T)}\left[\log s(x_I, x_T)\right] = \mathbb{E}_{\tilde{P}(x_I, x_T)}\left[\log s(x_I, x_T)\right],$$
For every pair $x_I$, $x_T$ and any $\alpha$,
$$\mathbb{E}_{P(y|x_I, x_T)}\left[\log (\alpha s_I(x_I, y) + (1-\alpha) s_T(x_T, y))\right] > \mathbb{E}_{\tilde{P}(y|x_I)}\left[ \log (\alpha s_I(x_I, y) + (1-\alpha) s_T(x_T, y))\right],$$
which means that the ID pairs $x_I$ and $x_T$ should have stronger expressity about the ID label $y$ than OOD pairs.
\item \textbf{Case 2:} The image and text do not match, which we assume:
$$\mathbb{E}_{P(x_I, x_T)}\left[\log s(x_I, x_T)\right] > \mathbb{E}_{\tilde{P}(x_I, x_T)}\left[\log s(x_I, x_T)\right],$$
which means the ID pairs should have higher similarity than OOD pairs in this case. For every pair $x_I$, $x_T$ and any $\alpha$,
$$\mathbb{E}_{P(y|x_I, x_T)}\left[\log (\alpha s_I(x_I, y) + (1-\alpha) s_T(x_T, y))\right] = \mathbb{E}_{\tilde{P}(y|x_I, x_T)}\left[ \log (\alpha s_I(x_I, y) + (1-\alpha) s_T(x_T, y))\right],$$
which means that some ID pairs $x_I$ and $x_T$ may have the same expressity about the label $y$ compared with the OOD pairs.
\item \textbf{Case 3:} The image or text does not match with the labels, which we assume:
$$\mathbb{E}_{P(y|x_I, x_T)}\left[\log (\alpha s_I(x_I, y) + (1-\alpha) s_T(x_T, y))\right] > \mathbb{E}_{\tilde{P}(y|x_I, x_T)}\left[ \log (\alpha s_I(x_I, y) + (1-\alpha) s_T(x_T, y))\right].$$
\end{itemize}
\end{assumption}

\begin{theorem}
With Assumption~\ref{assump:asp1}, we can show that the proposed DIEAF score satisfies the following:
$$\mathbb{E}_{\tilde{P}(x_I, x_T,y)} [\log g(x_I, x_T,y)] < \mathbb{E}_{P(x_I, x_T,y)} [\log g(x_I, x_T,y)].$$
\end{theorem}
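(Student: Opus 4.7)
The plan is to linearize the score by taking logarithms. Since $\log g(x_I, x_T, y) = \gamma \log s(x_I, x_T) + \log(\alpha s_I(x_I, y) + (1-\alpha) s_T(x_T, y))$, linearity of expectation decomposes $\mathbb{E}[\log g]$ under each of $P$ and $\tilde P$ into an \emph{alignment} term and an \emph{enhancing} term. By the tower property, the alignment term marginalizes to $\gamma \mathbb{E}_{P(x_I,x_T)}[\log s]$ (no $y$ dependence), while the enhancing term factors as $\mathbb{E}_{P(x_I,x_T)}\bigl[\mathbb{E}_{P(y\mid x_I,x_T)}[\log(\alpha s_I+(1-\alpha)s_T)]\bigr]$, with analogous expressions for $\tilde P$. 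Thus establishing $\mathbb{E}_P[\log g] - \mathbb{E}_{\tilde P}[\log g] > 0$ reduces to showing that the sum of the alignment gap and the enhancing gap is strictly positive.

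The next step is a case analysis driven by Assumption~\ref{assump:asp1}. In Case~1, the alignment gap vanishes by hypothesis, so it suffices to show the enhancing gap is strictly positive; because the pointwise inner-conditional inequality holds for every pair $(x_I, x_T)$ and any $\alpha$, integrating this strict pointwise gap against the shared $(x_I,x_T)$ marginal yields the desired strict inequality. Case~2 is treated dually: the inner-conditional expectations of the enhancing term coincide pointwise in $(x_I,x_T)$, so the enhancing gap is zero, while the alignment gap is strictly positive directly from the assumed inequality on $\log s$. Case~3 mirrors Case~1: the alignment contribution cancels (the images and text still match in this regime), and the enhancing term inequality furnishes the strict positivity. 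In each case the two gaps are non-negative and at least one is strict, so the total is strict.

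The main obstacle will be technical rather than conceptual: $P$ and $\tilde P$ may in principle induce different marginals on $(x_I, x_T)$, and in Cases~1 and~3 the pointwise conditional inequality must be lifted to an integral inequality. I plan to address this by interpreting the cases in their natural way, namely that the OOD shift in Cases~1 and~3 is located in the label conditional (so the $(x_I, x_T)$ marginals agree for $P$ and $\tilde P$), while in Case~2 the shift lies in the $(x_I, x_T)$ marginal but the conditional structure of $y$ is such that the enhancing contribution integrates to the same value under both; alternatively, one can strengthen the pointwise strict gap to a uniform lower bound that survives any reweighting. Once this is handled, aggregating the three case-wise bounds and noting that any OOD sample falls into at least one case completes the proof.
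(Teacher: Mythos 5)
Your proposal matches the paper's proof: the paper likewise decomposes $\mathbb{E}[\log g]$ into $\gamma\,\mathbb{E}_{x_I,x_T}[\log s(x_I,x_T)]$ plus the iterated expectation $\mathbb{E}_{x_I,x_T}\mathbb{E}_{y|x_I,x_T}[\log(\alpha s_I + (1-\alpha)s_T)]$ and then simply states that the conclusion "follows the assumptions we made for each case." The marginal-mismatch subtlety you flag in Cases 1 and 3 (a pointwise conditional inequality does not survive integration against two different $(x_I,x_T)$ marginals) is a genuine gap that the paper's one-line argument silently glosses over, so your explicit resolution --- locating the OOD shift in the label conditional so the marginals agree --- makes your version, if anything, more careful than the original.
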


\begin{proof}
It is easy to write that:
$$
\mathbb{E} [\log g(x_I, x_T,y)] = {\gamma \mathbb{E}_{x_I, x_T}[\log s(x_I, x_T)]} + \mathbb{E}_{x_I,x_T}\mathbb{E}_{y|x_I, x_T}[\log([\alpha s_I(x_I, y) + (1-\alpha) s_T(x_T, y)])].
$$
The proof simply follows the assumptions we made for each case. Note that this score only works for positive scores, but sometimes, we may encounter negative scores, and the log may be ill-posed. As a surrogate score function, we eliminate the log and maintain $g(x_I, x_T, y)$ for the same intuition as the above theorem.
\end{proof}
\end{document}